 \newtheorem{claim}{Claim}
\DeclareMathOperator{\w}{\mathbf{w}}
\DeclareMathOperator{\W}{\mathbf{W}}
\DeclareMathOperator{\x}{\mathbf{x}}
\DeclareMathOperator{\X}{\mathbf{X}}
\DeclareMathOperator{\K}{\mathbf{K}}
\DeclareMathOperator{\Y}{\mathbf{Y}}
\DeclareMathOperator{\A}{\mathbf{A}}
\DeclareMathOperator{\B}{\mathbf{B}}
\DeclareMathOperator*{\minimize}{\mathrm{minimize}}
\begin{document}

\title{KNIFE: Kernel Iterative Feature Extraction}

\author{\name Genevera I. Allen \\  \email giallen@stanford.edu \\
  \addr Department of Statistics \\
  Stanford University \\
  Stanford, CA 94305-4065, USA
}


\maketitle

\begin{abstract}
Selecting important features in non-linear or kernel spaces is a difficult
challenge in both classification and regression problems.  When many
of the features are irrelevant, kernel methods such as the support
vector machine and kernel ridge regression can sometimes perform poorly.  We
propose weighting the features within a kernel with a sparse set of
weights that are estimated in conjunction with the original
classification or regression problem.  The iterative algorithm, KNIFE,
alternates between finding the coefficients of the original problem
and finding the feature weights through kernel
linearization.  In addition, a
slight modification of KNIFE yields an efficient algorithm for finding
feature regularization paths, or the paths of each feature's weight.
Simulation 
results demonstrate the utility of KNIFE for both kernel regression and
support vector machines with a variety of kernels.  Feature path
realizations also reveal important non-linear correlations among
features that prove useful in determining a subset of significant
variables.  Results on  vowel recognition data, Parkinson's disease
data, and microarray data are also given.
\end{abstract}

\begin{keywords}
  Feature selection, Kernel methods, Support vector machine, Kernel
  ridge regression, Variable selection.
\end{keywords}

\section{Introduction}

Selecting important features with kernel regression and classification
methods is a challenging problem.   With linear
problems, however, 
several efficient feature selection methods exist.  These include
penalization methods, such as the lasso, elastic net and $L_{1}$ SVM
and logistic regression, subset methods, such as all subsets, forward
and backward elimination, and filtering methods such as correlation
and $t$-test filtering.  These types of feature selection techniques
also have analogous versions for kernel methods.  Filtering methods and subset
methods, especially the popular Recursive Feature Elimination
\citep{rfe} which
removes features in a backwards stepwise manner, are the
most commonly used methods.  In addition, several penalization methods
exist for the linear SVM \citep{neumann, l1svm}, and some also that
can be adapted for kernel SVMs \citep{wang, margin_feat, grandvalet,
  weston, guyon_multivariate}.  Many of these methods, however, are
computationally 
intensive and only applicable to the support vector machine.

Instead, we propose a penalization method that aims to extract
significant features by finding a sparse set of feature weights within
a kernel in conjunction with estimation of the model parameters.
Weights within
kernels are the key to our problem formulation and have been proposed
in several feature selection techniques \citep{cao, fvm, grandvalet,
  margin_feat, dc_alg, weston}.  Many of these methods do not directly
optimize the original regression or classification problem, but
instead seek to find a good set of weights on the features for later
use within the kernel of the model.  
\citet*{grandvalet}, however, formulate an optimization problem for the
support vector machine which iteratively optimizes the SVM criterion
for the coefficients and then weights within kernels.  They place an
$L_{p}$ penalty on the weights, noting that the optimization problem
is extremely non-convex for $p < 2$, and thus use $p = 2$ for convenient
computation.  An $L_{2}$ penalty, however, does not encourage
sparsity in the feature weights, and so we propose a similar optimization problem with an $L_{1}$
penalty that can be applied not only to the support vector machine,
but any kernel classification or regression problem.

In this paper, we present an algorithm that selects important features in
kernel methods: KerNel Iterative Feature Extraction (KNIFE).  First, in
Section \ref{section_knife_problem} we
discuss weighted kernels and the
 KNIFE optimization problem along with its mathematical challenges. 
Then we present our main algorithm, KNIFE, in Section
\ref{section_knife} discussing minimization
through kernel linearization.  We also discuss KNIFE for linear
kernels and its connections with other
previously proposed regression and classification methods along with
convergence results.  A
path-wise version of KNIFE is presented to determine weighted feature
paths in Section \ref{section_path_alg}.
Section \ref{section_results} gives both simulation results and
feature path realizations for kernel regression and support vector
machines with a variety of kernel types, along with examples of gene
selection for microarray data and feature selection in vowel
recognition and Parkinson's disease data.  We conclude with a
discussion, Section \ref{section_discussion}, giving possible
applications of KNIFE and future improvements for high-dimensional
settings.

\section{KNIFE Problem}
\label{section_knife_problem}

We propose to select important features by forming a penalized loss
function that involves a set of weights on the features within a
kernel.  Before presenting what we term feature weighted kernels and
the KNIFE optimization problem, we present an example of the need for
feature selection in non-linear spaces.

\subsection{Motivating Example}

To motivate the necessity of feature selection in kernels,
we present a classification example using support vector
machines.
Here, data is simulated under the skin of the orange simulation in
which there are four true features \citep{esl}.  Class one has four
standard 
normal features, $\x_{1}, \x_{2}, \x_{3}, \x_{4}$,
and the second class has the same conditioned on $9 \leq
\sum_{j=1}^{4} \x_{j}^{2} \leq 16$. Thus, the second class surrounds the
first class like the skin of an orange.  The two classes are not
completely separable, however, giving a Bayes error of 0.0611.  We
present a simulation where we add noise features to this model,
training the data on a set with 100 observations and reporting the test
misclassification error on a set with 1000 observations.  An SVM with
second order polynomials is used and parameters were selected using a
separate validation set.  These results are given in Figure
\ref{fig_svm_example_only}.  
From this skin of the orange example, we see that while SVMs perform
well when all the features are relevant, performance diminishes
greatly as more noise features are added.

\begin{figure}[!ht]
\begin{center}
    \includegraphics[width=2.5in]{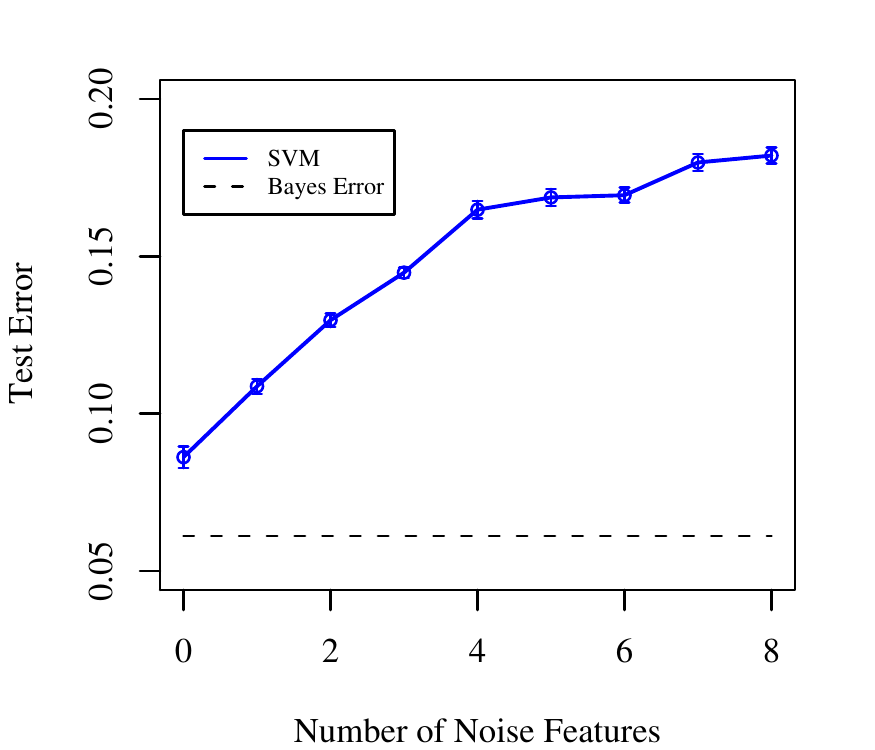}
\caption{ \it Mean test misclassification error when noise features are
  added to the skin of the orange simulation.  Support vector machines
 with second order polynomial kernels are trained on data with 100
 observations and tested on data with 1000 observations.  Ten
 simulations are conducted for each new noise feature.  The Bayes
 error is for this model is 0.0611.}
\label{fig_svm_example_only}
\end{center}
\end{figure}

We look at the support vector machine with non-linear kernels
mathematically to gain an understanding of why irrelevant features
have such an impact on the method.  
Given data $\x_{i}
\in \Re^{p}$ for $i = 1 \ldots n$ observations and $p$ features with
the response $\Y \in \{ -1, 1\}^{n}$.   The kernel
matrix, $\K_{\w} \in \Re^{n \times n}$ is defined by $\K_{\w}(i,i') =
k_{\w}(\x_{i}, \x_{i'}) =  \left( \sum_{j=1}^{p} x_{ij} x_{i'j}
  + 1 \right)^{d}$, for example with polynomial kernels.  Recall that
the support vector machine can be written as an unconstrained
minimization problem with the hinge loss \citep{wahba}.
\begin{align*}
\minimize_{\alpha, \alpha_{0}}  \hspace{4mm} \sum_{i=1}^{n} \left[ 1 - y_{i}
  \alpha_{0} - y_{i} (\K \alpha )_{i} \right]_{+} + \lambda \alpha^{T}
\K  \alpha.
\end{align*}
Here, the coefficients $\alpha$ determine the support vectors which
are sparse in the observations.  Each feature, however, has the same
impact on the objective since they are all given equal weight in the
kernel matrix, thus explaining the poor performance with noise
features.  We, therefore, propose to place feature weights within the
kernels to differentiate between true and noise features.

\subsection{Feature Weighted Kernels}
\label{section_weighted_kernels}

Before introducing feature weighted kernels, we give the data format.
The data, as previously mentioned, $\x_{i} \in \Re^{p}$ for $i=1
\ldots n$ observations and can be written as the data matrix $\X \in
\Re^{n \times p}$. We assume that $\x_{i}$ is standardized so
that it has mean zero and variance one.  For regression, the response
$\Y \in \Re^{n}$ and for classification, $\Y \in \{ -1, 1\}^{n}$.  
In the previous example, we gave an example of a typical kernel, the
polynomial kernel.  In this section, we present feature weighted
kernels, giving example for many common kernel types below.  These
kernels simply place a weight $\w \in \Re^{p+}$ on each feature in the
kernel.  

\begin{itemize}
\item {\it Inner Product Kernel:} \hspace{3mm}
$k_{\w}(\x,\x') = \displaystyle\sum_{j=1}^{p} w_{j} x_{j} x'_{j}$
\item {\it Gaussian (Radial) Kernel:} \hspace{3mm}
$k_{\w}(\x,\x') = \mathrm{exp} \left( -\gamma \displaystyle\sum_{j=1}^{p} w_{j}( x_{j}  - x'_{j})^{2} \right)$
\item {\it Polynomial Kernels:} \hspace{3mm}
$k_{\w}(\x,\x') = \left( \displaystyle\sum_{j=1}^{p} w_{j} x_{j} x'_{j} \right)^{d},
\hspace{4mm} k_{\w}(\x,\x') = \left( \displaystyle\sum_{j=1}^{p} w_{j} x_{j} x'_{j}
  + 1 \right)^{d}$
\end{itemize}

With these feature weighted kernels, we can define the kernel matrix
and loss function for a kernel prediction method as $\K_{\w} \in
\Re^{n \times n}$ such that $\K_{\w}(i,i') = k_{\w}(\x_{i}, \x_{i'})$.

\subsection{KNIFE Optimization Problem}

We incorporate these feature weighted kernels into the regression or
classification model.  The response, $\Y$ is modeled by $\Y = f(\X)$,
where $f(\x_{i}) = \sum_{i'=1}^{n} \alpha_{i} \K_{w}(i,i')$ for
regression or $f(\x_{i}) = \mathrm{sign}( \alpha_{0} + \sum_{i'=1}^{n}
\alpha_{i} \K_{w}(i,i') )$ for classification, and
$\alpha \in \Re^{n}$ are the coefficients that must be
estimated.  For a positive definite kernel, $\K_{\w}$, and $f(\X)$ a member
of the reproducing kernel Hilbert space, $\mathcal{H}_{K_{\w}}$, this
problem can be written as a minimization problem of the form
$\minimize_{f \in \mathcal{H}_{K_{\w}}} \left[ L(\Y, f(\X)) + \lambda
  || f ||^{2}_{\mathcal{H}_{K_{\w}}} \right]$, where $L(\Y, f(\X))$ is
the loss function.
Some common examples of these include the hinge loss (support
vector machine), squared error
loss (regression) or  binomial deviance loss (logistic regression).

To obtain a selection of
important variables in a problem of this form, we need the weights to be
both non-negative and 
sparse.  To this end, we propose adding an $L_{1}$ penalty on the
weights to induce sparsity and optimizing over the feasible set of
non-negative weights that are less than one.  This gives the KNIFE
optimization problem as stated below.
\begin{align}
\label{true_objective}
\minimize_{\alpha, \w} \hspace{5mm} & f(\alpha, \w)
= L(\Y, \K_{\w} \alpha ) + \lambda_{1}
\alpha^{T} \K_{\w} \alpha + \lambda_{2} \mathbf{1}^{T} \w \nonumber \\
\textrm{subject to} \hspace{4mm} & 0 \leq w_{j} < 1, \textrm{ for
  all } j = 1 \ldots p.
\end{align}

The KNIFE optimization problem, \eqref{true_objective}, is non-convex and it is therefore extremely
difficult to find a minimum, even for problems of small dimensions.
To illustrate the non-convexities and understand our approach to
finding a minimum, we offer an example.  
For SVMs with a Gaussian kernel, our optimization problem is
given by the following.
\begin{align}
\label{objective_svm}
\mathrm{minimize} \hspace{6mm}  & \sum_{i=1}^{n} \left[ 1 - y_{i} \alpha_{0}
 - y_{i} \sum_{i'=1}^{n} \alpha_{i'} \mathrm{exp} \left( - \gamma
    \sum_{j=1}^{p} w_{j} ( x_{ij} - x_{i'j} )^{2} \right)  \right]_{+}
\nonumber \\
& \qquad + \lambda_{1} \sum_{i=1}^{n} \sum_{i'=1}^{n} \alpha_{i} \alpha_{i'} \mathrm{exp} \left( - \gamma
    \sum_{j=1}^{p} w_{j} ( x_{ij} - x_{i'j} )^{2} \right) +
  \lambda_{2} \mathbf{1}^{T} \w \nonumber \\
\mathrm{subject \hspace{1mm} to} \hspace{4mm} & 0 \leq w_{j} < 1, \textrm{ for
  all } j = 1 \ldots p.
\end{align}
Here, notice that if we fix the weights, $\w$, we simply have a convex
optimization problem that is equivalent to solving the SVM problem.  If we fix $\alpha$ and
$\alpha_{0}$, however, the problem is not convex in $\w$.  This
results from the fact that the coefficients $\alpha$ are both positive
and negative.  One could approach this as a difference of convex
programming problem, a direction taken in \citep{dc_alg} for a slightly
different problem.  This method of minimizing \ref{true_objective} is computationally prohibitive.

Thus, we propose an alternative algorithmic approach by using
iterative convexifications of the weights within the kernel.  This is
discussed in the next section where we present the KNIFE algorithm.

\section{KNIFE Algorithm}
\label{section_knife}

Given the KNIFE optimization problem based on the feature weighted
kernels, we propose an algorithm to minimize the penalized loss
function \eqref{true_objective} for any kernel classification or
regression method.  The algorithm alternates between minimizing with
respect to the coefficients $\alpha$ and the feature weights $\w$.
For non-linear kernels, we need to convexify the weights to obtain a
feasible optimization problem.  But, to understand the fundamentals of
the algorithm we first discuss KNIFE for linear kernels in Section
\ref{section_linear}.  We then present the algorithm for non-linear
kernels, in Section \ref{section_algorithm}, also discussing kernel
convexification.  Finally, we give connections to several other
regression and non-parametric methods, Section
\ref{section_connections} along with KNIFE solution properties and
convergence results, Section \ref{section_convergence}.

\subsection{Linear Kernels}
\label{section_linear}

We give the coordinate-wise KNIFE algorithm for linear kernels which
form the foundation of the algorithm for non-linear kernels.  With
linear kernels, the kernel matrix becomes $\K_{\w} = \X \W \X^{T}$ where
$\W = \mathrm{diag}(\w)$.  This gives the following objective function.
\begin{align*}
f(\alpha, \w) =  L(\Y, \X \W \X^{T} \alpha ) + \lambda_{1} \alpha^{T} \X
\W \X^{T} \alpha + \lambda_{2} \mathbf{1}^{T} \w.
\end{align*} 
Letting $\beta = \X^{T} \alpha$, we arrive at
\begin{align}
\label{obj_linear}
f(\beta, \w) = L(\Y, \X \W \beta) + \lambda_{1} \beta^{T} \W \beta + \lambda_{2}
\mathbf{1}^{T} \w.
\end{align}
Here, notice that $f(\beta, \w)$ is a bi-convex function of $\beta$
and $\w$, meaning that if we fix $\beta$, $f( \cdot, \w)$ is convex in
$\w$ and if we fix $\w$, $f(\beta, \cdot)$  is convex in $\beta$.

This biconvex property leads to a simple coordinate-wise algorithm for
minimization, minimizing first with respect to $\beta$ with $\w$ fixed
and then with respect to $\w$.  While this coordinate descent
algorithm is monotonic, meaning that each iteration decreases the
objective function $f(\beta, \w)$, it does not necessarily converge
to the global minimum.  If the objective function satisfies certain
smoothness conditions (discussed in Section
\ref{section_convergence}), then coordinate descent converges to a
stationary point \citep{tseng}.  

For linear kernels, the optimization problem is still non-convex but
satisfies certain convex properties, namely bi-convexity in the
weights and the coefficients.  This is the approach that we will take for
non-linear kernels, namely linearizing the kernels with respect to
the weights to obtain a surrogate function which is convex in $\w$ as
presented in the following section.

\subsection{KNIFE Algorithm}
\label{section_algorithm}

We linearize kernels with respect to the feature weights to obtain a
function convex in the weights and hence conducive to easy
minimization.  From the previous section, we saw that if the function
is linear in the feature weights, then we can apply a block coordinate-wise
algorithm, finding the coefficients and then finding the weights.
Thus for non-linear kernels, we need to linearize kernels in such a
way that leads to an algorithm minimizing the objective function
\eqref{true_objective}.

The linearized kernels are given by $\tilde{k}_{\w}$ as defined
below.
\begin{align}
\label{kern_linearization}
\tilde{k}_{\w^{(t)}}(i,i') =
  k_{\w^{(t-1)}}(i,i') + \bigtriangledown k_{\w^{(t-1)}}(i,i')^{T} ( \w^{(t)} -
  \w^{(t-1)} ).
\end{align}
Notice that $\tilde{k}_{\w}(i,i')$ is the linearization of the
$(i,i')^{th}$ element of the kernel matrix $\K_{\w}$.  Here,
$\w^{(t-1)}$ is the weight vector from the previous iteration.
Minimization is done with respect to $\w^{(t)}$.

Linearizing the kernels in this manner, however, is not ideal for two
reasons.  First, notice that in non-linear kernels, the weights are placed on
the cross products or squared distance of the data vectors within a
non-linear function.  Thus, we need to place the weights on the same scale as
the data.  (Note that reparameterizing the linear kernels in terms of
$\beta$ solves this problem for linear kernels.) Secondly,
this naive linearization is not conducive to developing a stable
minimization algorithm.   To achieve these objectives, we linearize kernels with respect to
$\w^{2}$ instead of $\w$.  The differences between these can be
clearly seen with an example of the gradient of a polynomial kernel with
squared feature weights,
\begin{align*}
\bigtriangledown k_{\w^{(t-1)}}(i,i')_{k} =  2 d w^{(t-1)}_{k}  x_{ik} x_{i'k} \left( \sum_{j=1}^{p}
  w^{2^{(t-1)}}_{j} x_{ij} x_{i'j}  + 1 \right)^{d-1}.
\end{align*}
Here, notice that the gradient is scaled by the weights of the
previous iteration, $\w^{(t-1)}$.  Thus, if several weights were
previously set to zero, the gradient in those directions is zero
meaning that the weights will remain zero in all subsequent
iterations of the algorithm.  This feature, first of all maintains
sparsity in the feature weights throughout the algorithm, and secondly
limits the number of directions in which the weight vector can move in
succeeding iterations.  The second attribute can be critical to
algorithm convergence (see Section \ref{section_convergence}).

Thus, for non-linear kernels, our KNIFE objective changes slightly to
allow for linearization of the kernels with squared feature weights.
\begin{align}
\label{obj_w2}
f(\alpha, \w) = L(\Y, \K_{\w^{2}} \alpha ) + \lambda_{1}
\alpha^{T} \K_{\w^{2}} \alpha + \lambda_{2} \mathbf{1}^{T} \w .
\end{align}
We present the KNIFE algorithm for non-linear
kernels in Algorithm \ref{alg_knife}.  
\begin{algorithm}[!h]
\caption{KNIFE Algorithm}
\label{alg_knife}
\begin{enumerate}
\item Initialize $\alpha^{(0)}$ and $\w^{(0)}$ where $0 < w^{(0)}_{j} < 1$
  for $j = 1 \ldots p$.
\item Minimize $f(\alpha^{(t-1)}, \w^{(t-1)})$  with respect to
  $\alpha^{(t-1)}$.
\item Linearize $\K_{\w^{2}}$ element-wise giving $f(\alpha^{(t)} , \tilde{\w}^{(t-1)})$.
\item Minimize $f(\alpha^{(t)} , \tilde{\w}^{(t-1)})$  with respect to
  $\tilde{\w}^{(t-1)}$ subject to $0 \leq w_{j} < 1$ for $j = 1
  \ldots p$.
\item Repeat steps 2-4 until convergence.
\end{enumerate}
\end{algorithm}

We will take a closer look at the KNIFE algorithm by presenting an
example with support vector machines.  Step 2 is an SVM problem
finding the coefficients. 
\begin{align*}
\mathrm{minimize} \hspace{2mm}  \sum_{i=1}^{n} \left[ 1 - y_{i} \alpha_{0} -
y_{i} (\K_{\w^{2}} \alpha)_{i} \right]_{+} + \lambda_{1} \alpha^{T}
\K_{\w^{2}} \alpha.
\end{align*}
Now, we define 
\vspace{-2mm}
\begin{align*}
\B \in \Re^{n \times n}: \hspace{2mm} 
&  \B_{ii'} = k_{\w^{2^{(t-1)}}}(i,i') - \bigtriangledown
  k_{\w^{2^{(t-1)}}}(i,i')^{T} \w^{(t-1)} \\
 \A \in \Re^{n \times
    p}:  \hspace{2mm} & \A_{ii'}   = \sum_{i'=1}^{n} \alpha_{i'} \bigtriangledown
  k_{\w^{2^{(t-1)}}}(i,i')^{T}.
\end{align*}
Then, the objective function in Step
  4 becomes  
\begin{align*}
\textrm{minimize  } \sum_{i=1}^{n} \left[ 1 - y_{i} \alpha_{0} -
    y_{i} (\B \alpha)_{i} - y_{i} (\A \w)_{i} \right]_{+} + \lambda_{1}
  \alpha^{T} \A \w + \lambda_{2} \mathbf{1}^{T} \w.
\end{align*}
Notice that Step
4 reduces to a loss function that is linear in $\w$.  In fact, for
any loss function, the objective function of Step 4 becomes $L(\Y,
\B \alpha  + \A \w)$.

Thus, the KNIFE algorithm iterates between finding the coefficients of
the regression or classification problem and then finding the sparse
set of feature weights.  With linearizations of the kernels with
respect to the squares of the weights, the iterative optimization to
find coefficients and weights are both problems of the same form but in
different spaces.  For example, with squared error loss, minimization
with respect to the coefficients is a form of least squares problem in
$n$ dimensional space whereas minimization with respect to the weights
in the linearized kernel is also a form of least squares problem in
$p$ dimensional feature space.  Now that we have presented the
algorithm framework, we go back to the skin of the orange example to
briefly demonstrate the KNIFE algorithm in action.

\begin{figure}[!ht]
\begin{center}
    \includegraphics[width=2.5in]{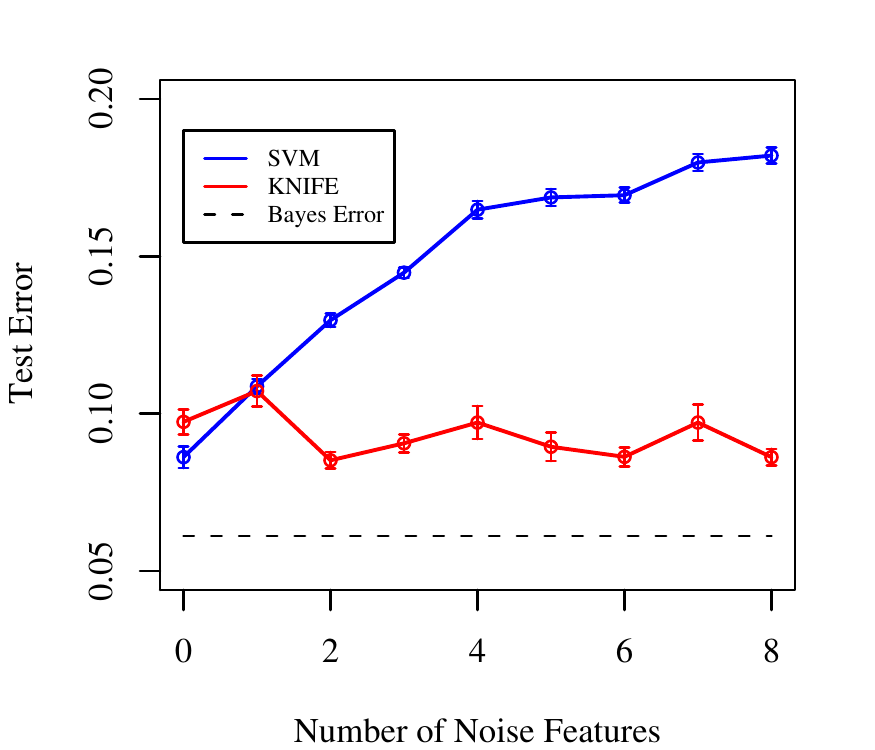}
\caption{ \it Mean test misclassification error when noise features are
  added to the skin of the orange simulation.  SVMs and KNIFE for SVMs
 with second order polynomial kernels are trained on data with 100
 observations and tested on data with 1000 observations.  Ten
 simulations are conducted for each new noise feature.  The Bayes
 error is for this model is 0.0611.}
\label{fig_svm_example}
\end{center}
\end{figure}

In Figure \ref{fig_svm_example}, we present the misclassification
error for the KNIFE as well as the SVM.  Here, we see that KNIFE
preforms well even when many noise features are added to the model.
To understand the reason for this performance improvement and further
examine the KNIFE problem and algorithm, 
we discuss several connections to existing
methods and present convergence results.

\subsection{Connections with Other Methods}
\label{section_connections}

While the optimization problem \eqref{true_objective} may appear
unfamiliar, there are several variations that are the same or similar
in form to many existing methods.  We first begin by looking at linear
kernels for regression problems with squared error loss.  A form of
the objective with linear kernels is given in the previous section,
\eqref{obj_linear}.  We give this and an equivalent form for comparison
purposes.  
\begin{align}
\label{lin_1}
\mathrm{minimize} \hspace{5mm} & || \Y - \X \W \beta ||_{2}^{2} +
\lambda_{1} \beta^{T} \W \beta + \lambda_{2}||\w ||_{1} \\
\label{lin_2}
\textrm{  or} \hspace{4mm} & || \Y - \X \tilde{\beta} ||_{2}^{2} + \lambda_{1}
\tilde{\beta}^{T} \W^{-1} \tilde{\beta} + \lambda_{2} ||\w ||_{1} \\
\textrm{subject to}\hspace{4mm} & \w \geq 0.  \nonumber
\end{align}

These are closely related to several common regression methods.
First, if we let $\lambda_{2} = 0$ and $\w = \mathbf{1}$ in
\eqref{lin_1}, we get ridge regression.  If we let $\lambda_{1} = 0$,
then we have the form of the non-negative garrote \citep{nng}.  The form of both
\eqref{lin_1} and \eqref{lin_2} is very similar to the elastic net
which places an $L_{1}$ and $L_{2}$ penalty on the coefficients
\citep{el_net}.  In 
the KNIFE, however, the $L_{1}$ penalty is not on the coefficients,
but on the weights that multiply the coefficients.  Letting
$\lambda_{1} = 0$, we get a problem very similar in structure and
intent to the lasso.  Also, if we let $\lambda_{2} = 0$, then we have
a problem that puts weights on the $L_{2}$ penalty on the
coefficients.  This is similar to the adaptive lasso which
places weights on the $L_{1}$ penalty on the coefficients
\citep{adalasso}.  

In addition to these methods, a special case of the COSSO
(Component Selection and Smoothing Operator) which estimates
non-parametric functions give \eqref{lin_1} and \eqref{lin_2}
exactly \citep{COSSO}.  This method in theory minimizes the sums of
squares between the response and a function with an $L_{2}$ penalty on
the projection of the function scaled by the inverse of a non-negative
weight.  This proposed theoretical form is also the form of
\eqref{lin_2}.  In addition, the COSSO employs an algorithm which
first fits a smoothing spline and then fits a non-negative garrote,
noting that these steps can be repeated.  This algorithmic approach is
also analogous to the KNIFE algorithm for the special case of squared
error loss with linear kernels.

These similarities between other regression methods and KNIFE hold
with other forms of loss functions also. 
For support vector machines, we have a problem similar to the $L_{1}$
and $L_{2}$ support vector machines in the same way that the
inner-product squared error loss KNIFE relates to ridge and lasso
regression.  The same is true of $L_{1}$ and $L_{2}$ regularized
logistic regression.

\subsection{Convergence of KNIFE}
\label{section_convergence}

In this section we discuss the convergence of the KNIFE algorithm and
the properties of the KNIFE solution,  also giving numerical
examples.
As previously discussed, the KNIFE objective is highly non-convex
and thus it is difficult to assert any claims on the convergence of
the KNIFE algorithm or the optimality of the KNIFE solution.  For
special cases, however, we can guarantee convergence of the KNIFE
algorithm to a stationary point.

\begin{claim}
\label{conv_claim}
If the KNIFE algorithm finds a unique minimum for the coefficients,
$\alpha$, and the weights, $\w$, in each step and the loss function and
kernel are continuously
differentiable, then the KNIFE algorithm monotonically decreases the
objective and converges to a stationary point of $f(\alpha, \w)$.  
\end{claim}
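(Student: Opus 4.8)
The plan is to read Claim~\ref{conv_claim} as a convergence statement for two-block coordinate minimization and to reduce it to the classical block coordinate descent result of \citet{tseng} (equivalently Bertsekas' coordinate-descent proposition). The essential structural fact I would exploit is that the feasible region factors as a Cartesian product of the coefficient space $\alpha \in \Re^{n}$ and the weight box $\w \in [0,1)^{p}$; this product structure is exactly what makes block-wise minimization coincide with full stationarity. I would interpret the hypothesis ``finds a unique minimum for the coefficients and the weights in each step'' as exact, unique minimization of $f$ over each block, and organize the argument into monotone descent, existence of limit points, and stationarity of those limit points.

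First I would establish monotonicity. Step~2 holds $\w$ fixed and, by hypothesis, returns the unique minimizer of $f(\cdot,\w^{(t-1)})$, so $f(\alpha^{(t)},\w^{(t-1)}) \le f(\alpha^{(t-1)},\w^{(t-1)})$; Step~4 holds $\alpha^{(t)}$ fixed and, again by hypothesis, minimizes over the box, giving $f(\alpha^{(t)},\w^{(t)}) \le f(\alpha^{(t)},\w^{(t-1)})$. Since $L$ is bounded below and the penalty terms $\lambda_{1}\alpha^{T}\K_{\w^{2}}\alpha$ and $\lambda_{2}\mathbf{1}^{T}\w$ are non-negative on the feasible set, the non-increasing sequence $\{ f(\alpha^{(t)},\w^{(t)}) \}$ is bounded below and hence converges. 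Next I would extract a convergent subsequence: the weights live in the bounded box $[0,1)^{p}$, and the coefficients remain in a compact set because the sublevel set $\{ (\alpha,\w) : f(\alpha,\w) \le f(\alpha^{(0)},\w^{(0)}) \}$ is compact, with coercivity supplied by the quadratic term for a positive definite kernel. Bolzano--Weierstrass then yields $(\alpha^{(t_{j})},\w^{(t_{j})}) \to (\alpha^{*},\w^{*})$.

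With this in hand I would invoke the convergence theorem for block coordinate descent. The objective $f$ is continuously differentiable, which follows from the assumed $C^{1}$ smoothness of $L$ and of the kernel since sums, products, and compositions of $C^{1}$ maps are $C^{1}$. Because each block is minimized exactly and uniquely over a constraint set that is itself a factor of the product feasible region, the theorem of \citet{tseng} gives that every limit point is a coordinate-wise minimum; and for a $C^{1}$ objective on a product domain a coordinate-wise minimum satisfies the first-order (KKT) conditions, i.e. is a stationary point of $f$. Strictly this yields ``every limit point is stationary,'' which is the defensible form of the claimed convergence.

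The main obstacle is the gap between the true objective and the linearized surrogate actually minimized in Step~4: a first-order Taylor expansion is not in general a majorizer, so minimizing the surrogate need not by itself decrease $f$, and the theorem above is stated for exact coordinate minimization of $f$. The point I would nail down is first-order consistency of the linearization \eqref{kern_linearization}: at the expansion point $\w^{(t-1)}$ the surrogate and its gradient in $\w$ agree with those of $f$, so at any fixed point $\w^{(t)}=\w^{(t-1)}$ the stationarity conditions of the Step~4 problem coincide exactly with those of $f$ in $\w$. This is what reconciles the practical linearized algorithm with the exact-minimization hypothesis used by Tseng's result. A secondary technicality is that the constraint $w_{j}<1$ is open, so a limit point could approach the boundary; I would dispose of this by noting that the uniqueness hypothesis keeps each iterate in the interior and that, away from $w_{j}=1$, the box behaves as a closed product set for the stationarity argument.
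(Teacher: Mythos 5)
Your argument is correct and follows essentially the same route as the paper: both reduce the claim to the block coordinate descent theorem of \citet{tseng}, using continuous differentiability of the loss and kernel to obtain regularity of $f(\alpha,\w)$ and the unique-block-minimum hypothesis to conclude monotone descent and stationarity of limit points, with your write-up merely supplying the boundedness, compactness, and product-structure details the paper leaves implicit. The only caveat is your closing paragraph on the linearized surrogate: the claim's hypothesis of exact unique minimization in each step is, as the paper itself notes, only met in the bi-convex (e.g.\ linear-kernel) setting where no linearization is performed, and first-order consistency of the surrogate at the expansion point gives fixed-point consistency but does not extend Tseng's theorem to the linearized iteration --- a limitation the paper explicitly concedes for non-linear kernels, so this reconciliation is neither needed for the claim nor fully established by your argument.
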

\begin{proof}
Differentiability of the loss function and kernel implies that
$f(\alpha, \w)$ is regular on its domain.  This along with unique
minima in both blocks of coordinates satisfies conditions for monotonic
convergence to a 
stationary point for non-convex functions.  In theory,
differentiability can be relaxed to weaker conditions for regularity
\citep{tseng}.   
\end{proof}

The conditions in Claim \ref{conv_claim} are satisfied with strictly
convex, differentiable loss functions with linear kernels.  An example
here is the squared error loss and linear kernel given in
\eqref{lin_1}.  We have noted that these examples are bi-convex, and
thus the KNIFE algorithm simply iterates between minimization with
respect to the coefficients and then the feature weights.  Hence, for
these special cases of KNIFE, we are guaranteed to converge to a
stationary point.  We must note, however, that for non-convex
functions, there can be potentially many stationary points.  Thus, the
stationary point at which KNIFE arrives will depend on the random
starting values of the weights.  For this reason, we recommend
initializing the KNIFE algorithm at several random starting points and
taking the solution which gives the minimum
objective value.

\begin{figure}[!ht]
\begin{center}
    \includegraphics[width=6in]{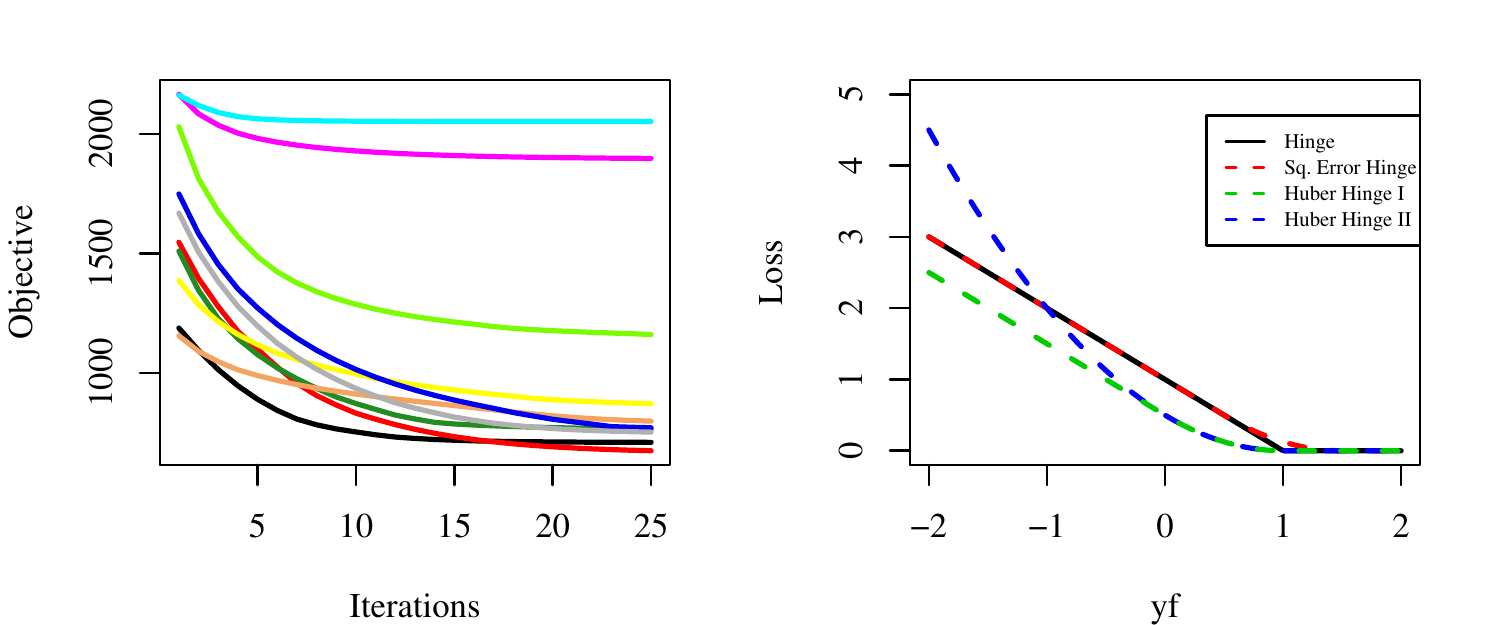}
\caption{\it (a) KNIFE objective for iterations of the KNIFE algorithm
  starting at 10 random set of weights.  Here, a radial kernel with
  squared error loss is used.  (b) Smooth approximations to the
  non-differentiable hinge loss for the support vector machine.}
\label{fig_conv}
\end{center}
\end{figure}

We pause to note the importance of loss function differentiability.  
While many loss functions, such as squared
error and binomial deviance, are continuously differentiable, there is
one notable 
exception, namely the hinge loss of support vector machines.  Because
of the necessity of smoothness conditions for non-convex functions as
proven 
in  \citet{tseng}, the coordinate-wise minimizations of KNIFE
for SVMs may
never converge.   Hence, we suggest using surrogate loss
functions to approximate the hinge loss.
Several smooth versions of this loss have been suggested,
including squared error hinge and a Huberized hinge loss
\citep{hinge}, as shown in Figure \ref{fig_conv} (b).  These both have
been shown to approximate the results 
of the support vector machine well.  Thus, we propose that instead of
using the KNIFE algorithm directly with SVMs, to run the algorithm
with a smoothed version of the hinge loss.  Then, given the set of
optimal weights, calculate the SVM coefficients using the hinge loss.
Our experience reveals that in general, this scheme finds a
reasonable solution which decreases the original objective function
with hinge loss.

While we have discussed KNIFE for linear kernels, we have not given
any results on the convergence of 
KNIFE for non-linear kernels in which we do not find the unique
minimum with respect to the feature weights in each step.  As
previously discussed, for non-linear kernels, the objective is highly
non-convex with respect to the feature weights.  Thus, we cannot claim
any theoretical convergence results for KNIFE with respect to these
kernels.  In numerical examples, however, for differentiable, convex loss
functions and kernels, the KNIFE algorithm converges and decreases the
objective monotonically.  This is shown in an example with radial
kernels and squared error loss in Figure \ref{fig_conv} (a).  

There are several intuitive reasons explaining observed convergence of
KNIFE for convex, differentiable losses and kernels.  First, if the
kernel is convex and differentiable, then the linearization with
respect to the square of the weights is a global under-estimator of the kernel.
Thus, one can surmise that minimizing the KNIFE objective with the
linearized kernel will tend to decrease the objective, except for
pathological cases.  If we restrict the feature weights in each
iteration to be close to previous weights, we know that a linear
approximation is close to the true function if the function is continuous.
For this reason, linearizing the kernel with the squared weights is
crucial since it restricts the search directions for the weight
vector, keeping the weights close to the previous set of weights.
Also, since the minimization with respect to the feature weights is
followed by estimating the coefficients, for which the objective is
convex, the KNIFE algorithm will, in general,
decrease the KNIFE objective. Since the objective is bounded below,
this explains the observed convergence of the KNIFE algorithm in
numerical examples.  We surmise that with possibly stricter conditions
on the objective 
function, stronger theoretical convergence results may be attainable
even for non-linear kernels.  
Overall, while we do not give precise convergence
results, we recommend using KNIFE with convex,
differentiable losses and kernels.

\subsection{KNIFE Feature Path Algorithm}
\label{section_path_alg}

\begin{figure}[!ht]
\begin{center}
    \includegraphics[width=2.25in]{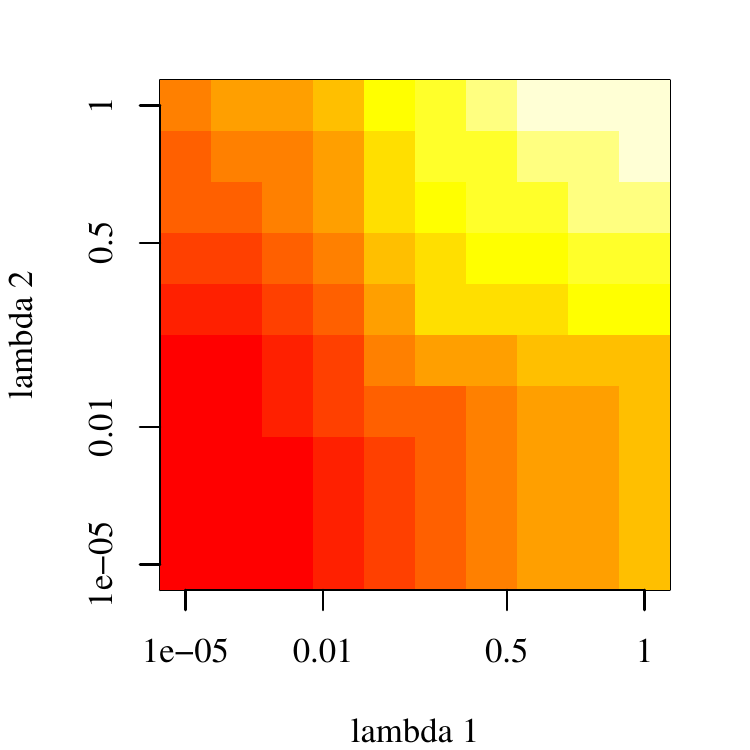}
\caption{ \it Values of the KNIFE objective for several penalty
  parameter, $\lambda_{1}$ and $\lambda_{2}$ values.  Darker blocks
  represent smaller values of the objective. }
\label{fig_params}
\end{center}
\end{figure}

Since the KNIFE method uses regularization to extract important
features, we can modify KNIFE to give a path-wise algorithm resulting
in non-linear feature paths.  Regularization paths are common in
linear regression problems where the values of the coefficients are
given for each value of a penalty parameter used.  Our path algorithm
is similar.  For KNIFE, however, we have two penalty parameters,
$\lambda_{1}$ and $\lambda_{2}$.  Both of these parameters place a
penalty on the feature weights and thus are related.  We explore this
relationship in Figure \ref{fig_params} where we give values of the KNIFE
objective \eqref{true_objective} for various penalty parameter values.

While both parameter values effect the feature weights
and the value of the objective, we have pointed out that the parameter
$\lambda_{2}$ encourages sparsity.  Hence, when formulating a feature
path algorithm, we focus on $\lambda_{2}$, fixing the value of
$\lambda_{1}$.  In general, setting $\lambda_{1} = 1$ (or if the loss
function is given as $\frac{1}{n} L(\Y, f(\X))$, then $\lambda_{1} =
\frac{1}{n}$) performs
reasonably well and is thus our default value for the remainder of the
paper.  Now, setting $\lambda_{2} = 0$ gives
no direct penalty on the feature weights and thus all features are
permitted to be non-zero.  Hence, the path algorithm
varies from $\lambda_{2} = 0$ where all feature weights are non-zero to $\lambda_{2} =
M$, where $M$ is the value at which all weights become zero.  The
path-wise algorithm is given below in Algorithm \ref{alg_knife_path}.

\begin{algorithm}[!ht]
\caption{KNIFE Feature Path Algorithm}
\label{alg_knife_path}
\begin{enumerate}
\item Fix $\lambda_{1}$, set $\lambda_{2} = 0$ and initialize $\alpha$ and $\w$.
\item Fit KNIFE with $\alpha^{(t-1)}$ and $\w^{(t-1)}$ as warm starts.
\item Increase $\lambda_{2}$.
\item Repeat Steps 2-3 until $\w = \mathbf{0}$.
\end{enumerate}
\end{algorithm}

Algorithm \ref{alg_knife_path} maps out feature paths because of two
attributes of the original KNIFE algorithm.  First, recall that in
KNIFE we linearize kernels with respect to the square of the weights,
creating an algorithm that is sticky at zero.  This means that as we
increase $\lambda_{2}$ once a particular feature's weight is set to zero, it
cannot ever become non-zero.  This attribute permits us to efficiently use warm
starts for the coefficients and weights, speeding computational time
considerably.  Also, with warm starts and a small increase in $\lambda_{2}$,
one can use a single update of the coefficients and weights at each
iteration to approximate the feature paths.

We pause briefly to compare this path algorithm to the well known
coefficient paths of the lasso and LAR (Least Angle Regression)
algorithms \citep{lar}.  In both of these regularization paths, the algorithm
begins with no variables in the model and incrementally includes
coefficients who most correlate with the response.  In our KNIFE
path algorithm, however, we begin with all features in the model and
incrementally eliminate the features that are uncorrelated (in the
kernel space) with the response.  Thus, the KNIFE path algorithm can
be thought of as a regularization approach to backwards elimination
for kernels.
Also, the lasso regularization paths permit coefficient paths to cross
zero and enter and re-enter the model.  KNIFE does not allow this
because of the {\it sticky} property of the feature weights, meaning
that one a feature weight is set to zero it cannot move away from zero.  The
KNIFE path algorithm is then like a kernel analog of other common
coefficient regularization paths.

\section{Results}
\label{section_results}

In this section, we explore the performance of the KNIFE algorithm
and the KNIFE path algorithm on both real and simulated data.  We
demonstrate KNIFE in conjunction with two predictions methods, least
squares and support vector machines.  Three of the most common
kernels, the inner product kernel, polynomial kernels, and Gaussian (radial)
kernels are used.  For three simulations, we give results both in
terms of prediction error and feature path realizations, comparing
KNIFE to existing feature selection methods such as Sure Independence
Screening (SIS) \citep{sis} and Recursive Feature Elimination
(RFE) \citep{rfe}.  Finally, we give results on gene selection in colon
cancer microarray data \citep{colon} and feature selection in vowel
recognition data \citep{esl} and Parkinson's disease data \citep{parkinsons}.  

\subsection{Simulations}
\label{section_simulations}

We present three simulation examples to demonstrate the performance of
KNIFE: linear regression, non-linear regression, and non-linear
classification. Each of the three simulations were repeated fifty
times and error rates are averaged.  Training sets were generated of
dimension $100 \times 10$ and test sets were of dimension $1000 \times
10$.  Parameters for KNIFE and all other comparison methods were found
by taking the parameter giving the minimum error on a separate
validation set of dimension $1000 \times 10$. For all KNIFE methods,
$\lambda_{1}$ was fixed at 1 and $\lambda_{2}$ was found by
validation.  To be fair, we validate one parameter for each comparison
method also.  All simulations include
true features are noise features as specified below.

\subsubsection{Linear Regression}

We simulate data from a linear model to compare the KNIFE method using
an inner product kernel with squared error loss to common regression
methods.  Recall that this most basic form of KNIFE given in
\eqref{lin_1} is closely related to many regression methods as
discussed in Section \ref{section_connections}.  This simulation is
then given more to illustrate these connections between regression
methods than to promote the use of KNIFE for linear regression.   

In this simulation, we have ten features, five of which are random
noise.  The true coefficients are then $\beta^{true} =[6, -4, 3,
2 ,-2, 0, 0, 0, 0, 0]^{T}$.  We take the data, $\x \in \Re^{p}$, to be
standard normal and the response is given by the following model.  $y
= \x \beta^{true} + \epsilon$, where $\epsilon \sim N(0,1)$.  The
results in terms of training and test error are given in Table
\ref{tab_linear}.  

\begin{table}[!ht]
\begin{center}
\begin{tabular}{|l|c|c|}
\hline
Method & Training Error & Test Error \\
\hline
Least Squares &    0.9220 (.0026)   & 1.0937 (.0015) \\
\hline
 Ridge &   0.9220 (.0026) &   1.0937 (.0015) \\
\hline
Lasso  &  0.9452  (.0026) &   {\bf  1.0725} (.0014) \\
\hline
Elastic Net &     0.9472 (.0026) &    1.0746 (.0014) \\
\hline
KNIFE &    0.9377 (.0026) &   {\bf 1.0738 }  (.0016) \\
\hline
\end{tabular}
\caption{ \it Simulation results for linear regression.  The data has ten
  features, five of which are noise features.  Mean squared error on
  training and test sets are given with the standard error in
  parenthesis.  The two best performing methods are in bold. }
\label{tab_linear}
\end{center}
\end{table}

These results show that the linear KNIFE method
performs similarly to
other sparse regression methods such as the lasso and elastic net.  

\subsubsection{Non-linear Regression}

To investigate KNIFE with other kernels, we
simulate a sinusoidal, non-linear regression problem.  Here we still use squared
error loss but use KNIFE with radial and second order polynomial
kernels.  We compare KNIFE to linear ridge regression and kernel or
generalized ridge regression.  In addition we give results for the
filtering method SIS and the selection method RFE both used with kernel
ridge regression.  We note here that the scale parameter, $\gamma$,
for the radial kernel is set to $\gamma = 1/p$, a commonly used
default for the non-KNIFE methods using radial kernels.  This
simulation, like the first, also has ten features with five of
them random noise.  The true coefficients and data are the same as in the
previous simulation, but we change the model to be $y =
\mathrm{sin}(\x) \beta^{true} + \epsilon$, thus adding a non-linear
element.  

\begin{table}[!ht]
\begin{center}
\begin{tabular}{|l|c|c|}
\hline
Method & Training Error & Test Error \\
\hline
Ridge &    4.8337 (.0202) &    6.2589 (.0109) \\
\hline
Kernel Ridge  &  0.0874 (.0005) &  6.1239 (.0106) \\
\hline
SIS/ Kernel Ridge  &   0.9592 (.0122) &   {\bf 3.8119} (.0389) \\
\hline
RFE/ Kernel Ridge  &  0.7848 (.0074) &   5.4386 (.0366) \\
\hline
KNIFE/ Kernel Ridge (radial) &   2.1187 (.0058) &  {\bf  3.5498} (.0083) \\
\hline
KNIFE/ Kernel Ridge (polynomial)  &   5.2376 (.0181) &  6.8591 (.0186) \\
\hline
\end{tabular}
\caption{\it Simulation results for sinusoidal, non-linear regression. The data has ten
  features, five of which are noise features.  Mean squared error on
  training and test sets are given with the standard error in
  parenthesis.  The two best performing methods are in bold.   }
\label{tab_l2}
\end{center}
\end{table}

\begin{figure}[!ht]
\begin{center}
    \includegraphics[width=6in]{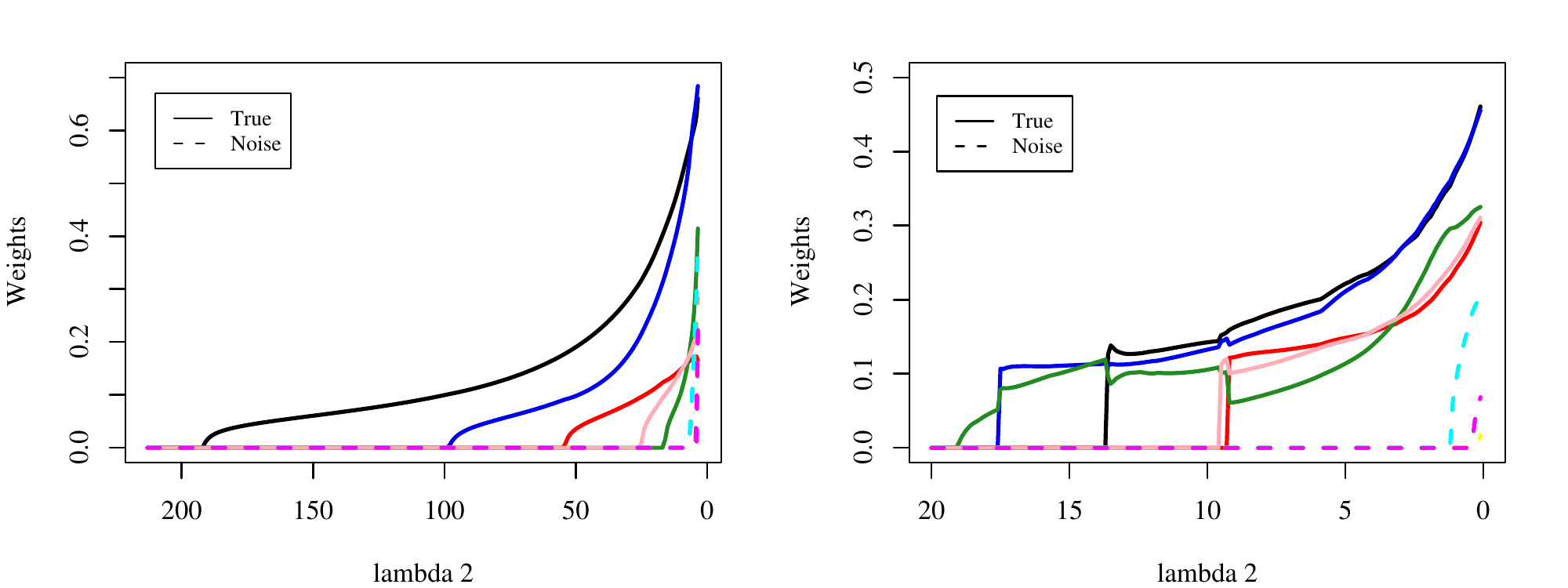}
\caption{ \it Feature path realizations of KNIFE for polynomial kernels of order 2
  (left) and radial kernels (right) with a squared error loss
  function.  Data of dimension $100 \times 10$ was simulated from the
  sinusoidal simulation with 
  five true features and five noise features.  KNIFE for both kernel
  types gives non-zero weights to the five true features for much of
  the feature paths.}
\label{fig_l2_paths}
\end{center}
\end{figure}

In Table \ref{tab_l2}, we report the mean squared error for the
training and test sets over the fifty simulations.  We see that KNIFE
with radial kernels outperforms competing methods including the
commonly used SIS and RFE methods for extracting important features.
In addition, the feature weights of KNIFE are very stable, with only
five total noise features given a non-zero weight for radial kernels out of
the all five noise features in fifty simulations.
Among the KNIFE methods, a radial kernel gives much better error rates than
the second order polynomial, indicating that choosing the wrong kernel
can be problematic.  To investigate this further we give the
entire feature path realizations for both radial and polynomial kernels in Figure
\ref{fig_l2_paths}.  We see that while the polynomial kernel gives
much smoother feature paths, the radial kernel estimates the true
features for a much larger portion of the feature path.

Here we make a brief note about the feature weights in radial
kernels.  In general, the scaling factor $\gamma$ can be extremely
important in radial kernels, but for KNIFE, we do not need to include
any scale factors.  The feature weights themselves act as automatic
scaling factors, adjusting to fit the data.  This is seen in the radial
feature paths of Figure \ref{fig_l2_paths} where all of the feature
weights adjust when one new non-zero feature is added to the model.

\subsubsection{Non-linear Classification}

We use support vector machines to assess KNIFE's performance on
non-linear classification simulations.  The simulation is the skin of
the orange simulation, previously presented as a motivating example, with four true features and six noise features \citep{esl}.  Here, the first class has four standard normal features,
$\x_{1}, \x_{2}, \x_{3}, \x_{4}$,
and the second class has the same conditioned on $9 \leq
\sum_{j=1}^{4} \x_{j}^{2} \leq 16$.  Thus, the model has one class which
is spherical with the second class surrounding the sphere like the
skin of the orange.  

\begin{table}[!ht]
\begin{center}
\begin{tabular}{|l|c|c|}
\hline
Method & Training Error & Test Error \\
\hline
SVM      &   0 (0)  &  {\bf 0.1918} (.0006) \\
\hline
SIS / SVM    &     0 (0) &    0.3689 (.0016) \\
\hline
RFE / SVM  &  0.0206 (.0007) &    0.1937 (.0014) \\
\hline
KNIFE &    0.0468 (.0006) &    {\bf 0.1136} (.0009) \\
\hline
\end{tabular}
\caption{\it Average misclassification errors for the skin of the orange
  simulation with four true features and six noise features.  All
  methods use support vector machines with second 
  order polynomials.  Standard errors are in parenthesis with the two
  best performing methods in bold. }
\label{tab_svm}
\end{center}
\end{table}

The results of this simulation in terms of misclassification error on
he training and test sets are presented in
Table \ref{tab_svm}.  A second order polynomial kernel was used for
all methods.  We note that KNIFE was used with the squared error hinge
loss approximation to the hinge loss of the SVM.  On this simulation
also, KNIFE outperforms both regular SVMs and other common feature
selection methods.  A portion of the feature paths are presented in
Figure \ref{fig_svm_path}.  Here, two out of the four true features
are selected for much of the path.  This is due to the fact that all
four true features are of the same distribution for this simulation.
Here, we also comment on a unique property of KNIFE for support vector
machines.  The SVM is sparse in the observation space, meaning that
only a subset of the observations are chosen as support vectors.  With
KNIFE SVM, we get sparsity in the feature space also, leaving us with
an important  sub-matrix of observations and features
that can limit computational storage for prediction purposes.

\begin{figure}[!ht]
\begin{center}
    \includegraphics[width=3in]{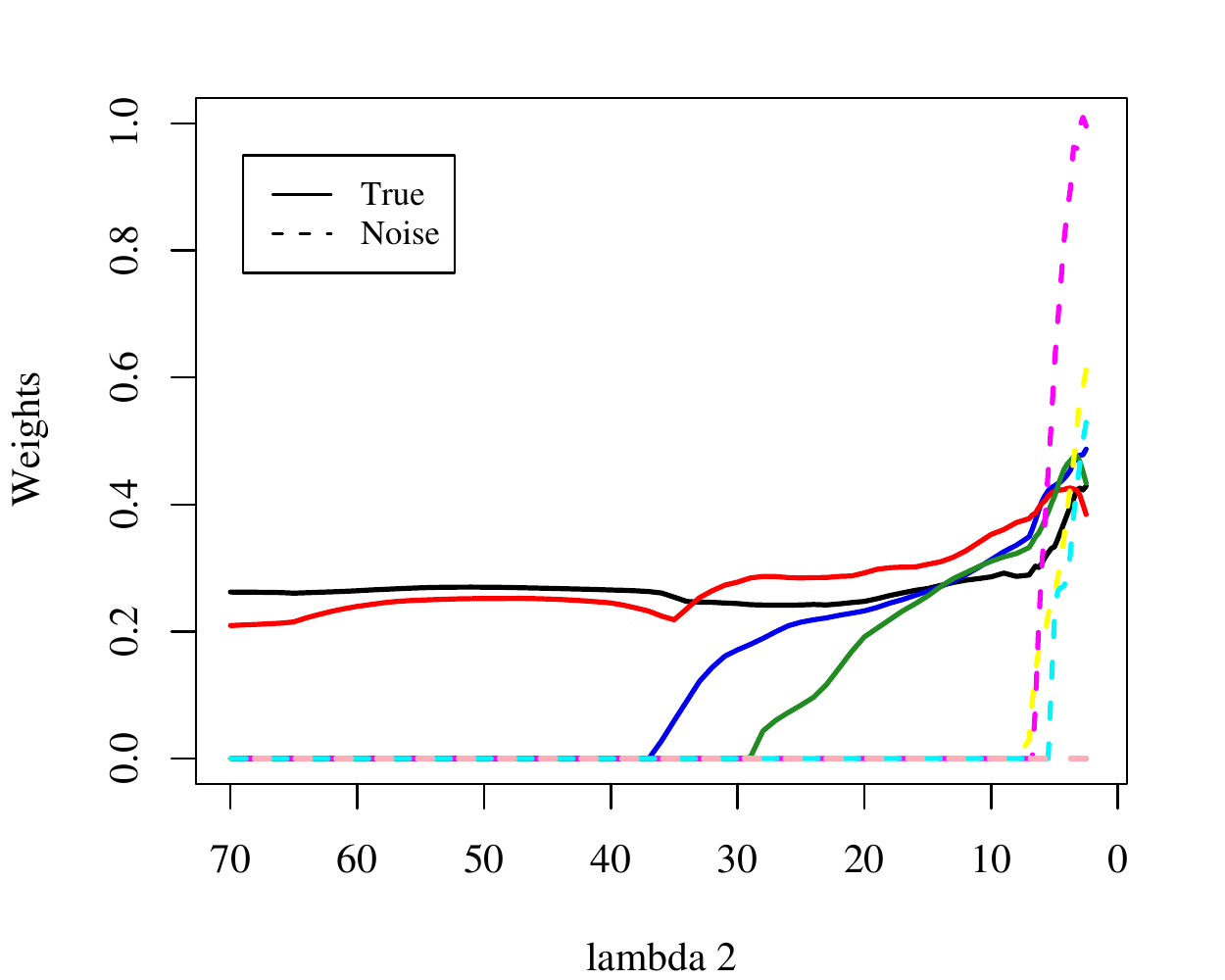}
\caption{ \it Feature path realization of KNIFE for polynomial kernels
  of order 2 for support vector machines (hinge loss).  Data of
  dimension $100 \times 10$ was simulated from the skin of the orange
  simulation with four true features and six noise features.}
\label{fig_svm_path}
\end{center}
\end{figure}

\subsection{Example Data}

Finally, we apply KNIFE to three feature selection applications,
beginning with
microarray data.  With thirty thousand human genes, doctors often need
a small subset of genes to test that are predictive of a disease.  For
this application, we use microarray data on colon cancer given in \citet{colon}.  The dataset consists of 62 samples, 22 of
which are normal and 40 of which are from colon cancer tissues.  The
genes are already pre-filtered, consisting of the 2,000 genes with the highest variance across samples.

\begin{table}[!ht]
\begin{center}
\begin{tabular}{|l|c|c|c|}
\hline
\# Genes  & SIS/SVM & RFE/SVM & KNIFE/SVM \\
\hline
2000  & 0.0355 (.0062)  & 0.0355 (.0062) & 0.0387 (.0062)\\
\hline
500  & 0.0419 (.0027)  & 0.1290 (.0050) & 0.0484 (.0059)\\
\hline
250  & 0.0516 (.0027)  & 0.1452 (.0063) & 0.0516 (.0057)\\
\hline
100  & 0.0613 (.0039)  & 0.1677 (.0066) & 0.0710 (.0054)\\
\hline
50  & 0.0645 (.0043)  & 0.1774 (.0065) & 0.0806 (.0057)\\
\hline
25  & 0.0677 (.0024)  & 0.1742 (.0041) & 0.0871 (.0055)\\
\hline
15  & 0.0742 (.0040)  & 0.1484 (.0049) & 0.1065 (.0059)\\
\hline
10  & 0.0968 (.0046)  & 0.1581 (.0060) & 0.1194 (.0061)\\
\hline
\end{tabular}
\end{center}
\caption{\it Average misclassification rates with standard errors on ten randomly created test
  sets for the colon cancer microarray data.  All methods use a linear
  support vector machine. }
\label{tab_colon}
\end{table}

For this analysis, we use a linear SVM for classification, comparing
KNIFE to gene filtering using SIS and RFE.  We evaluate eight subsets
of previously fixed numbers of genes on the three methods.  For the gene selection with RFE, we begin by eliminating 50 genes, ten genes and
then one gene at each step as outlined in
\citet{rfe}. To
determine predictive ability, we split the samples randomly into
training and test sets of equal sizes.  This is repeated ten times and
misclassification rates are averaged.  These results are given in
Table \ref{tab_colon}.

\begin{table}[!ht]
\begin{center}
\begin{tabular}{|l|c|c|}
\hline
\# Genes  & SIS/SVM  & KNIFE/SVM \\
\hline
2000&    0.5913 &   0.5913\\
\hline
500 &    0.6365  &  0.5969\\
\hline
250 &    0.6451  &  0.5981\\
\hline
100 &    0.6534  &  0.5951\\
\hline
50 &    0.6674   & 0.5964\\
\hline
25 &    0.6652  &  0.5898\\
\hline
 15 &   0.6910   & 0.5946\\
\hline
10 &    0.7139   & 0.5895\\
\hline
\end{tabular}
\end{center}
\caption{\it Median of absolute pair-wise correlations of genes selected
  by each method on the colon cancer microarray data.}
\label{tab_colon_cor}
\end{table}

The results indicate that KNIFE outperforms the commonly used RFE
filtering method for all subsets of genes.  For smaller
subsets, however, SIS filtering performs the best in terms of test
error.  While the subset of genes determined by SIS may be good in
terms of prediction, often researchers are interested in a subset of
genes that are members of different pathways and are hence less
correlated.  For each of the subsets, we report the median of the
absolute pair-wise correlations of the genes selected by both SIS and
KNIFE in Table \ref{tab_colon_cor}.  Here, we notice that KNIFE
selects a group of genes that have around that same median
correlation as the original 2,000 genes.  On the other hand, SIS
filtering tends to select highly correlated groups of genes that might
not be as desirable for research purposes.

The vowel recognition dataset \citet{esl} consists of eleven
classes of vowels broken down into ten features which fifteen
individuals were recorded saying six times.  For illustration, we apply a radial kernel SVM and KNIFE to classify between vowels 'i' and 'I' based on ten
features shown on the left in Figure \ref{fig_vowel}.  We use five fold
cross-validation to determine the margin size for the SVM and the
$\lambda_{2}$ value for KNIFE.  Both methods were trained on a dataset
with 48 instances of each vowel and tested with 42 instances of each.
KNIFE gives a test misclassification error of 8.3\% while the SVM gives an error of
19.1\%.  We see, in Figure \ref{fig_vowel} that KNIFE selects six
features that are indicative of the two vowel types.  For some other pairs
of vowels, however, KNIFE
does not perform better than SVMs indicating that all ten features may
be needed for classification of other vowels.

\begin{figure}[!ht]
\begin{center}
    \includegraphics[width=6in]{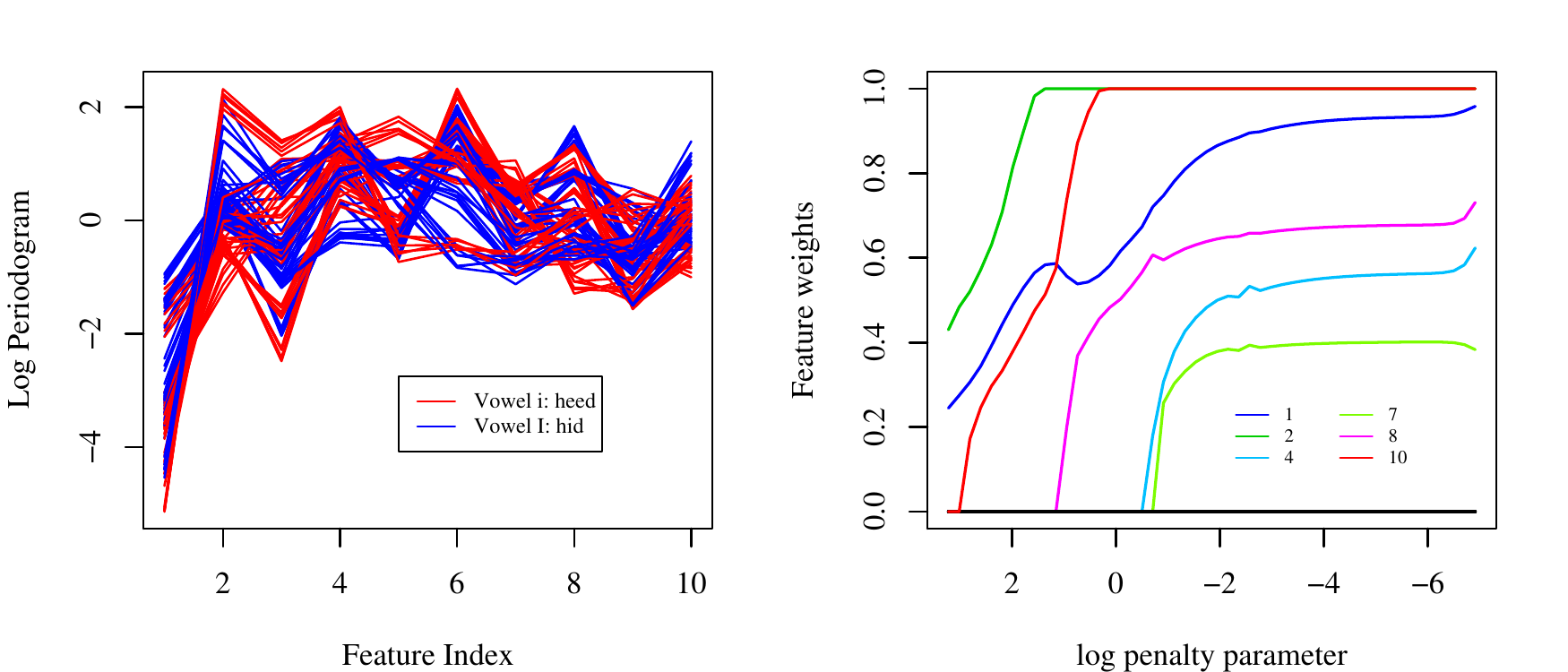}
\caption{\it Vowel recognition data for vowels 'i' and 'I' (left) with
KNIFE feature paths for classifying the two vowels using a support
vector machine with radial kernels (right).  Five fold
cross-validation on the training set chose to include the six
features shown.  KNIFE gives a test misclassification error of 8.3\%,
while a radial 
kernel SVM has 19.1\% test error.}
\label{fig_vowel}
\end{center}
\end{figure}

The Parkinson's disease dataset consists of 22 biomedical voice
measurements from 31 individuals, 23 of which have Parkinson's
disease \citep{parkinsons}.  Five-fold cross-validation by individuals was used to choose
the optimal margin for SVMs and $\lambda_{2}$ for KNIFE.  We then
randomly divided the data (by individuals) into training and test sets
of equal size.  Both KNIFE and the SVM were able to perfectly separate
diseased from healthy individuals in all 100 trials.  KNIFE, however,
chooses an average of eight features for use in the classification.  In
Table \ref{tab_park}, we give the top eleven most frequently selected
features and compare these to the features selected by the model in
\citet{parkinsons}, which were selected by filtering and
assessing all possible feature combinations.  Several of the features measure
similar attributes and thus are not often selected together.  Three
features, however, were selected in all 100 models and given a large weight.

\begin{table}[!h]
\begin{tabular}{|l|c|c|c|c|}
\hline
Feature & Explanation & Average  & Times & Selected by
\\
 & & Weight & Selected & Little {\it et al.} \\
\hline
DFA & Signal fractal scaling exponent & 0.902 & 100/100 & Yes \\
MDVP:Fo(Hz) & Average fundamental frequency & 0.879 & 100/100 &
No \\
RPDE & Dynamical complexity measure & 0.803 & 100/100 & Yes
\\
HNR & Ratio to noise of tonal components & 0.669  & 87/100  & Yes \\
spread 2 & Fundamental frequency variation & 0.668 &  83/100 & No \\
spread 1 &  Fundamental frequency variation & 0.388 & 72/100 & No \\
D2 & Dynamical complexity measure & 0.168 & 68/100 &   Yes \\
MDVP:Flo(Hz) & Minimum fundamental frequency & 0.133 & 58/100 & No \\
MDVP:RAP  & Fundamental frequency variation & 0.229 & 33/100 & No \\
PPE &  Fundamental frequency variation &  0.294 & 32/100 & Yes \\
Shimmer:APQ3 & Variation in amplitude & 0.106 & 16/100 & No \\
\hline
\end{tabular}
\caption{\it Features, or biomedical voice measurements, selected by
  KNIFE with radial kernel SVMs for classification of healthy
  individuals from those with 
  Parkinson's disease.  The average weight and the number of times out
  of 100 which KNIFE selects the feature are given.   We also report
  whether the feature was selected by
  \citet{parkinsons} for use in their radial SVM classifier.}
\label{tab_park}
\end{table}

\section{Discussion}
\label{section_discussion}

We have presented a method for selecting important features with
non-linear kernel regression and classification methods: KerNel
Iterative Feature Extraction 
(KNIFE).  The KNIFE optimization problem forms feature weighted
kernels and seeks to find the minimum of a penalized, feature weighted
kernel loss function 
with respect to both the coefficients and the feature weights.  We
have given the KNIFE algorithm which iteratively finds the
coefficients, linearizes the kernels, then finds the set of feature
weights.  This algorithm, under broad conditions, converges and
decreases the KNIFE objective for each iteration.  A path-wise
algorithm is also given for the kernel feature weights.  Finally, we
have demonstrated the utility of KNIFE for feature selection and kernel
prediction for several example simulations and microarray data.

Computationally, the KNIFE algorithm compares favorably to existing
kernel feature selection methods, which the exception of simple
feature filtering methods including SIS.  The KNIFE algorithm
iterates between an optimization problem in the $n$-dimensional
feature space and then a $p$-dimensional feature space. 
 For comparison, RFE solves a 
problem in $n$-dimensional space several times, checking each
remaining feature at each iteration.  Thus, several kernels must be
computed for each iteration of RFE.  Also, for support vector
machines, existing methods such as the Radius-Margin bound are
computed approximately using computationally intensive
conjugate-gradient methods \citep{weston}.  In addition, since the
KNIFE algorithm generally decreases the objective, one can stop the
algorithm after a few iterations to limit computational costs.

For the KNIFE algorithm with particular regression or classification
problems, we did not give problem specific KNIFE algorithms.  With
squared error loss, for example, finding the coefficients is simply
performing kernel ridge regression, while finding the feature weights
is performing a non-negative penalized kernel least squares.  For
the squared error hinge loss approximation to the support vector
machine, finding the feature weights amounts to non-negative
projected, penalized kernel least squares.  These problem specific
algorithms deserve further investigation and are left the future
work.  In addition, algorithms applicable to high-dimensional settings
are needed.

The KNIFE technique for kernel feature selection is applicable to a
variety of kernels and regression and classification problems,
specifically methods with convex and differentiable loss functions and
kernels.  Also, KNIFE can be used with non-differentiable loss
functions if a surrogate smoothed version of the loss is used for the
iterative steps as demonstrated with SVMs.  This broad applicability
of KNIFE means it can be used in conjunction with most kernel
regression and classification problems, including kernel logistic
regression which has a binomial deviance loss.  In addition, KNIFE may
be modified to work with kernel principal
component analysis and kernel discriminant analysis (also kernel
canonical correlation) which can be written with a Frobenius norm
loss.  Thus, the KNIFE method has many potential future uses for
feature selection in a variety of kernel methods.

\section{Acknowledgments}

We would like thank Robert Tibshirani for the helpful suggestions and
guidance in developing and testing this method.
We would also like to thank Stephen Boyd for suggesting kernel
convexification and Trevor
Hastie for the helpful suggestions.

\bibliography{knife}

\end{document}